\DeclareMathOperator{\sign}{sign}
\DeclareMathOperator{\conf}{conf}
\newtheorem{definition}{Definition}
\title{A Confidence-Based Approach for Balancing Fairness and Accuracy}
\author{Benjamin Fish\and Jeremy Kun\and \'Ad\'am D. Lelkes\thanks{University
of Illinois at Chicago, Department of Mathematics, Statistics, and Computer
Science} \thanks{\{bfish3, jkun2, alelke2\}@uic.edu}}
\begin{document}

\maketitle

\begin{abstract} 

We study three classical machine learning algorithms in the context of
algorithmic fairness: adaptive boosting, support vector machines, and logistic
regression.  Our goal is to maintain the high accuracy of these learning
algorithms while reducing the degree to which they discriminate against
individuals because of their membership in a protected group. 

Our first contribution is a method for achieving fairness by shifting the decision
boundary for the protected group.  The method is based on the theory of margins
for boosting.  Our method performs comparably to or outperforms previous algorithms
in the fairness literature in terms of accuracy and low discrimination,
while simultaneously allowing for a fast and transparent quantification
of the trade-off between bias and error. 

Our second contribution addresses the shortcomings of the bias-error trade-off
studied in most of the algorithmic fairness literature. We demonstrate that
even hopelessly naive modifications of a biased algorithm, which cannot be
reasonably said to be fair, can still achieve low bias and high accuracy.  To
help to distinguish between these naive algorithms and more sensible algorithms
we propose a new measure of fairness, called \emph{resilience to random bias}
(RRB). We demonstrate that RRB distinguishes well between our naive and
sensible fairness algorithms.  RRB together with bias and accuracy provides a
more complete picture of the fairness of an algorithm. 

\end{abstract}

\section{Background and Motivation} \label{sec:background}

\subsection{Motivation}

Machine learning algorithms assume an increasingly large role in making
decisions across many different areas of industry, finance, and government,
from facial recognition and social network analysis to self-driving cars to
data-based approaches in commerce, education, and policing. The decisions made
by algorithms in these domains directly affect individual people, and not
always for the better.  Consequently, there has been a growing concern that
machine learning algorithms, which are often poorly understood by those that
use them, make discriminatory decisions.

If the data used for training the algorithm is biased, a machine learning
algorithm will learn the bias and perpetuate discriminatory decisions against
groups that are protected by law, even in the absence of ``discriminatory
intent'' by the designers. A typical example is an algorithm serving predatory
ads to protected groups. Such issues resulted in a 2014 report from the US
Executive Office~\cite{PodestaPMHZ14} which voiced concerns about
discrimination in machine learning. The primary question we study in this paper
is
\begin{center}
How can we maintain high accuracy of a learning algorithm while reducing
discriminatory biases?
\end{center}
In this paper we will focus on the issue of biased training data, which is one
of the several possible causes of discriminatory outcomes in machine learning.
In this setting, we have a protected attribute (e.g. race or gender) which we
assert should be independent from the target attribute.  For example, if the
goal is to decide creditworthiness for loans and the protected attribute is
gender, a classifier's prediction should not correlate with an applicant's
gender. We say that the classifier achieves \emph{statistical parity} if the
protected subgroup is as likely as the broader population to have a given
label.

Of course, there might be situations where the target label depends on
legitimate factors that correlate with the protected attribute. For example,
if the protected attribute is gender and the target label is income, some argue
that lower salaries for women can be partly explained by the fact that on
average, men work longer hours than women. In this paper we assume that this
is not the case. The issue of ``explainable discrimination'' in machine
learning was studied in \cite{KamiranZC13}.

In our setting, since we only have biased data, we cannot evaluate our
classifiers against an unbiased ground truth. In particular only a biased
classifier could achieve perfect accuracy; to achieve statistical parity in
general one must be willing to reduce accuracy. Hence the natural goal is to
find a classifier that achieves statistical parity while minimizing error, or
more generally to study the trade-off between bias and accuracy so as to make
favorable trade-offs.
\subsection{Contributions}
Our first contribution in this paper is a method for optimizing this trade-off
which we call the \emph{Shifted Decision Boundary} (SDB). SDB is a generic
method based on the theory of margins~\cite{CortesV95,SchapireFBL98}, and it
can be combined with any learning algorithm that produces a measure of
confidence in its prediction  (Section~\ref{sec:sdb}). In particular we combine SDB with boosting,
support vector machines, and logistic regression, and it performs comparably to
or outperforms previous algorithms in the fair learning literature.  See Section~\ref{sec:experiments} for its empirical evaluation.
We also give a
theorem based on the analysis in~\cite{SchapireFBL98} bounding the loss of
accuracy for SDB under weighted majority schemes (Section~\ref{sec:sdbtheory}). 
SDB makes the assumptions on the bias explicit and
transparent, so that the trade-off can be understood without a detailed
understanding of the learning algorithm itself. 

Unfortunately, studying the bias-error trade-off is an incomplete picture of
the fairness of an algorithm. The shortcomings were discussed
in~\cite{DworkHPR12}, e.g., in terms of how an adversary could achieve
statistical parity while still targeting the protected group unfairly. We
demonstrate these shortcomings in action even in the absence of adversarial
manipulation. Among other methods, we show that modifying a classifier by
randomly flipping certain output labels with a certain probability already
outperforms much of the prior fairness literature in both accuracy and bias.
Such a naive algorithm is obviously unfair because the relabeling is
independent of the classification task. Our second contribution is a measure of
fairness that addresses this shortcoming, which we call \emph{resilience to
random bias}. We define it in Section~\ref{sec:rrb} and demonstrate that it
distinguishes well between our naive baseline algorithms and SDB.

\subsection{Existing notions of fairness} 

The study of fairness in machine learning is young, but there has been a lot of
disparate work studying notions of what it means for data to be fair. Finding 
the ``right'' definition of fairness is a major challenge; see the extensive
survey of~\cite{RomeiR14} for a detailed discussion. Two prominent definitions
of fairness that have emerged are \emph{statistical parity} and
\emph{$k$-nearest-neighbor consistency.} We review them briefly now.

\emph{Statistical parity:} Let $D$ be a distribution over a set of labeled
examples $X$ with labels $l : X \to \{-1, 1\}$ and a protected subset $S \subset
X$. The \emph{bias} of $l$ with respect to $D$ is defined as the difference in
probability of an example in $S$ having label 1 and the probability of an
example in $S^C$ having label 1, i.e.  $$ B(D, S) = \Pr_{x \sim D|_{S^C}}[l(x)
= 1] - \Pr_{x \sim D|_{S}}[l(x) = 1].  $$ The bias of a hypothesis $h$ is the
same quantity with $h(x)$ replacing $l(x)$.  If a hypothesis has low bias in
absolute value we say it achieves \emph{statistical parity.} Note that $S$
represents the group we wish to protect from discrimination, and the bias
represents the degree to which they have been discriminated against.  The sign
of bias indicates whether $S$ or $S^C$ is discriminated against.  A similar
statistical measure called \emph{disparate impact} was introduced and studied
by Friedler et al.~\cite{FriedlerSV14} based on the ``$80\%$ rule'' used in
United States hiring law.

Dwork~et~al.~\cite{DworkHPR12} point out that statistical parity is only a
measure of population-wide fairness. They provide a laundry list
of ways one could achieve statistical parity while still exhibiting serious and
unlawful discrimination. In particular, one can achieve statistical parity by
flipping the labels of a certain number of arbitrarily chosen members of the
disadvantaged group, regardless of the relation between the individuals and the
classification task. In our experiments we show this already outperforms some
of the leading algorithms in the fairness literature.

Despite this, it is important to study the ability for learning algorithms to
achieve statistical parity.  For example, it might be
reasonable to flip the labels of the ``most qualified'' individuals of the
disadvantaged group who are classified negatively. Some previous approaches
assume the existence of a ranking or metric on individuals, or try to learn
this ranking from data~\cite{KamiranC09,DworkHPR12}. By contrast, our SDB
achieves statistical parity without the need for such a ranking.

\emph{$kNN$-consistency:} The second notion, due to~\cite{DworkHPR12}, calls a
classifier ``individually fair'' if it classifies similar individuals
similarly. They use $k$-nearest-neighbor to measure the consistency of labels
of similar individuals. Note that ``closeness'' is defined with respect to a
metric chosen as part of the data cleaning and feature selection process. By
contrast SDB does not require a metric on individuals.

\subsection{Previous work on fair algorithms} Learning algorithms studied
previously in the context of fairness include naive Bayes~\cite{CaldersV10},
decision trees~\cite{KamiranCP10}, and logistic
regression~\cite{KamishimaAAS12}.  To the best of our knowledge we are the
first to study boosting and SVM in this context, and our confidence-based
analysis is new for both these and logistic regression. 

The two main approaches in the literature are massaging and regularization.
Massaging means changing the biased dataset before training to remove the bias
in the hope that the learning algorithm trained on the now unbiased data will
be fair.  Massaging is done in the previous literature based on a ranking
learned from the biased data~\cite{KamiranC09}. The regularization
approach consists of adding a regularizer to an optimization objective which
penalizes the classifier for discrimination~\cite{KamashimaAS11}. While SDB can
be thought of as a post-processing regularization, it does so in a way that
makes the trade-off between bias and accuracy transparent and easily
controlled. 

There are two other notable approaches in the fairness literature. The first,
introduced in \cite{DworkHPR12}, is a framework for maximizing the utility of a
classification with the constraint that similar people be treated similarly.
One shortcoming of this approach is that it relies on a metric on the data that
tells us the similarity of individuals with respect to the classification task.
Moreover, the work in~\cite{DworkHPR12} suggests that learning a suitably fair
similarity metric from the data is as hard as the original problem of finding a
fair classifier. Our SDB method does not require such a metric.

The ``Learning Fair Representations'' method of Zemel et al.~\cite{ZemelWSPD13}
formulates the problem of fairness in terms of intermediate representations:
the goal is to find a representation of the data which preserves as much
information as possible from the original data while simultaneously obfuscating
membership in the protected class.  Given that in this paper we seek to make
explicit the trade-off between bias and accuracy, we will not be able to hide
membership in the protected class as Zemel et al.~seeks to do. Rather, we align
with the central thesis of~\cite{DworkHPR12}, that knowing the protected
feature is useful to promote fairness. 

\subsection{Margins}

The theory of margins has provided a deep, foundational explanation for the
generalization properties of algorithms such as AdaBoost and soft-margin
SVMs~\cite{CortesV95,SchapireFBL98}. A hypothesis $f: X \to [-1,1]$ induces a
\emph{margin} for a labeled example $\textup{margin}_f(x,y) = y\cdot f(x)$,
where $x\in X$ is a data point and $y\in\{-1,1\}$ is the correct label for $x$.
The sign of the margin is positive if and only if $f$ correctly labels $x$, and
the magnitude indicates how confident $f$ is in its prediction.

As an example of the power of margins, we quote a celebrated theorem on the
generalization accuracy of weighted majority voting schemes in PAC-learning.
Here a weighted majority vote is a function $f(x) = \sum_{i=1}^N \alpha_i
h_i(x)$ for some hypotheses $h_i \in H$ and $\alpha_i \geq 0, \sum_i \alpha_i =
1$.

\begin{theorem}[Schapire et al.~\cite{SchapireFBL98}]\label{thm:margin-generalization}
Let $D$ be a distribution over $X \times \{ -1,1\}$ and $S$ be a sample of $m$
examples chosen i.i.d. at random according to $D$. Let $H$ be a set of
hypotheses of VC-dimension $d$. Then for any $\delta > 0$, with probability at
least $1-\delta$ \emph{every} weighted majority voting scheme satisfies the
following for every $\theta > 0$:
$$
\begin{aligned}\Pr_D[yf(x) \leq 0] &\leq \Pr_S[yf(x) \leq \theta] + \\ & O \left (
\frac{1}{\sqrt{m}} \left ( \frac{d \log^2(m/d)}{\theta^2} + \log (1/\delta) \right )^{1/2} \right )
\end{aligned}
$$
\end{theorem}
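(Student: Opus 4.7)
The plan is to follow the classical sub-sampling argument of Schapire et al. The core idea is that although the class of all weighted majority votes is infinite-dimensional, any particular $f = \sum_{i} \alpha_i h_i$ can be approximated by a random average $\hat f = \frac{1}{n}\sum_{j=1}^{n} h^{(j)}$ where $h^{(1)}, \dots, h^{(n)}$ are drawn i.i.d.\ from $H$ according to the distribution $\alpha$. The functions $\hat f$ live in a much smaller class $C_n$ of $n$-fold averages of hypotheses from $H$, and the complexity of $C_n$ can be controlled using the VC-dimension of $H$.

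First I would fix $\theta > 0$ and relate $f$ to $\hat f$ pointwise via Hoeffding's inequality. Since $\hat f(x)$ is an average of $n$ bounded i.i.d.\ variables with mean $f(x)$, for any fixed $(x,y)$ we get $\Pr_{\hat f}[y\hat f(x) - yf(x) > \theta/2] \le \exp(-n\theta^2/8)$, and similarly in the other direction. Taking expectations over $D$ and $S$ respectively yields
$$
\Pr_D[yf(x) \le 0] \le \mathbb{E}_{\hat f}\bigl[\Pr_D[y\hat f(x) \le \theta/2]\bigr] + e^{-n\theta^2/8},
$$
$$
\mathbb{E}_{\hat f}\bigl[\Pr_S[y\hat f(x) \le \theta/2]\bigr] \le \Pr_S[yf(x) \le \theta] + e^{-n\theta^2/8}.
$$

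Next I would apply standard uniform convergence over the restricted class $C_n$. Since each $\hat g \in C_n$ is specified by an $n$-tuple from $H$, the growth function of $\{(x,y) \mapsto \mathbf{1}[y\hat g(x) \le \theta/2] : \hat g \in C_n\}$ on $m$ points is at most $(em/d)^{nd}$ by Sauer's lemma applied to each coordinate. A Vapnik–Chervonenkis style uniform bound then gives, with probability at least $1-\delta_n$ over $S$, that for every $\hat g \in C_n$,
$$
\Pr_D[y\hat g(x) \le \theta/2] \le \Pr_S[y\hat g(x) \le \theta/2] + \varepsilon(m, n, d, \delta_n),
$$
with $\varepsilon = O\!\left(\sqrt{(nd\log(m/d) + \log(1/\delta_n))/m}\right)$. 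Chaining the three inequalities through the random $\hat f$ and taking a union bound over $\hat f$ (which is handled automatically since the uniform bound covers the whole class $C_n$) produces, for the fixed $\theta$,
$$
\Pr_D[yf(x) \le 0] \le \Pr_S[yf(x) \le \theta] + 2e^{-n\theta^2/8} + \varepsilon.
$$

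Finally I would choose $n \approx (4/\theta^2)\log(m/d)$ to equalize the Hoeffding term with the $\sqrt{nd\log(m/d)/m}$ term, yielding the advertised rate $O\!\left(\sqrt{(d\log^2(m/d)/\theta^2 + \log(1/\delta))/m}\right)$. To make the bound hold simultaneously for all $\theta > 0$, I would discretize $\theta$ to a countable sequence (e.g.\ $\theta_k = 2^{-k}$), apply the fixed-$\theta$ statement with failure probability $\delta_k = \delta / (k(k+1))$, and union-bound; any intermediate $\theta$ is absorbed by monotonicity of $\Pr_S[yf(x) \le \theta]$ in $\theta$ at the cost of a constant factor. The main obstacle is the optimization of $n$ together with the bookkeeping for the $\theta$ discretization, which is where the $\log^2(m/d)$ factor ultimately arises; the probabilistic ingredients (Hoeffding and Sauer) are otherwise routine.
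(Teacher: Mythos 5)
Your proposal is a correct reconstruction of the classical argument; note that the paper itself offers no proof of this theorem, since it is imported verbatim from Schapire et al.~\cite{SchapireFBL98}, so the only meaningful comparison is to that source. Your sketch follows the same route as the original: the random $n$-fold sub-sampling of $f$ into the class $C_n$, the two Hoeffding steps relating $yf(x)\le 0$ to $y\hat f(x)\le\theta/2$ to $yf(x)\le\theta$, the Sauer/growth-function union bound over $C_n$, the choice $n = \Theta(\theta^{-2}\log(m/d))$, and a discretization to get uniformity over $\theta$ (your dyadic grid in $\theta$ is a minor, equally valid variant of their union over $n$ and the finitely many thresholds $i/n$).
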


In other words, the generalization error is bounded by the probability of a
small margin \emph{on the sample}. One can go on to show
AdaBoost~\cite{SchapireF12}, a popular algorithm that produces a weighted
voting scheme, performs well in this respect. Recall that the output of
AdaBoost is a hypothesis which outputs the sign of a weighted majority vote
$\sum_i \alpha_i, h_i(x)$. Rather than measure the margin we measure the
\emph{signed confidence} of the boosting hypothesis on an unlabeled example $x$
as $$ \conf(\mathbf x) = \frac{\sum_{i=1}^T \alpha_i h_i(\mathbf
x)}{\sum_{i=1}^T \alpha_i}.$$ The magnitude of the confidence measures the
agreement of the voters in their classification of an example.

The theoretical work on margins for boosting suggests that examples with small
confidence are more likely to have incorrect labels than examples with large
confidence. For example, we display in Figure~\ref{fig:boosting-margins} the
signed confidence values for all examples and incorrectly predicted examples respectively.
The incorrect examples have confidence centered around zero. One
can leverage this for fairness by flipping negative labels of members of the
protected class with a small confidence value. This is a rough sketch of the SDB
method. The empirical results of SDB suggest that SDB achieves statistical
parity with relatively little loss in accuracy. Indeed, we state a similar guarantee
to Theorem~\ref{thm:margin-generalization} in Section~\ref{sec:sdbtheory} that
solidifies this intuition.

\begin{figure}[t]
\centering
\includegraphics[width=\columnwidth]{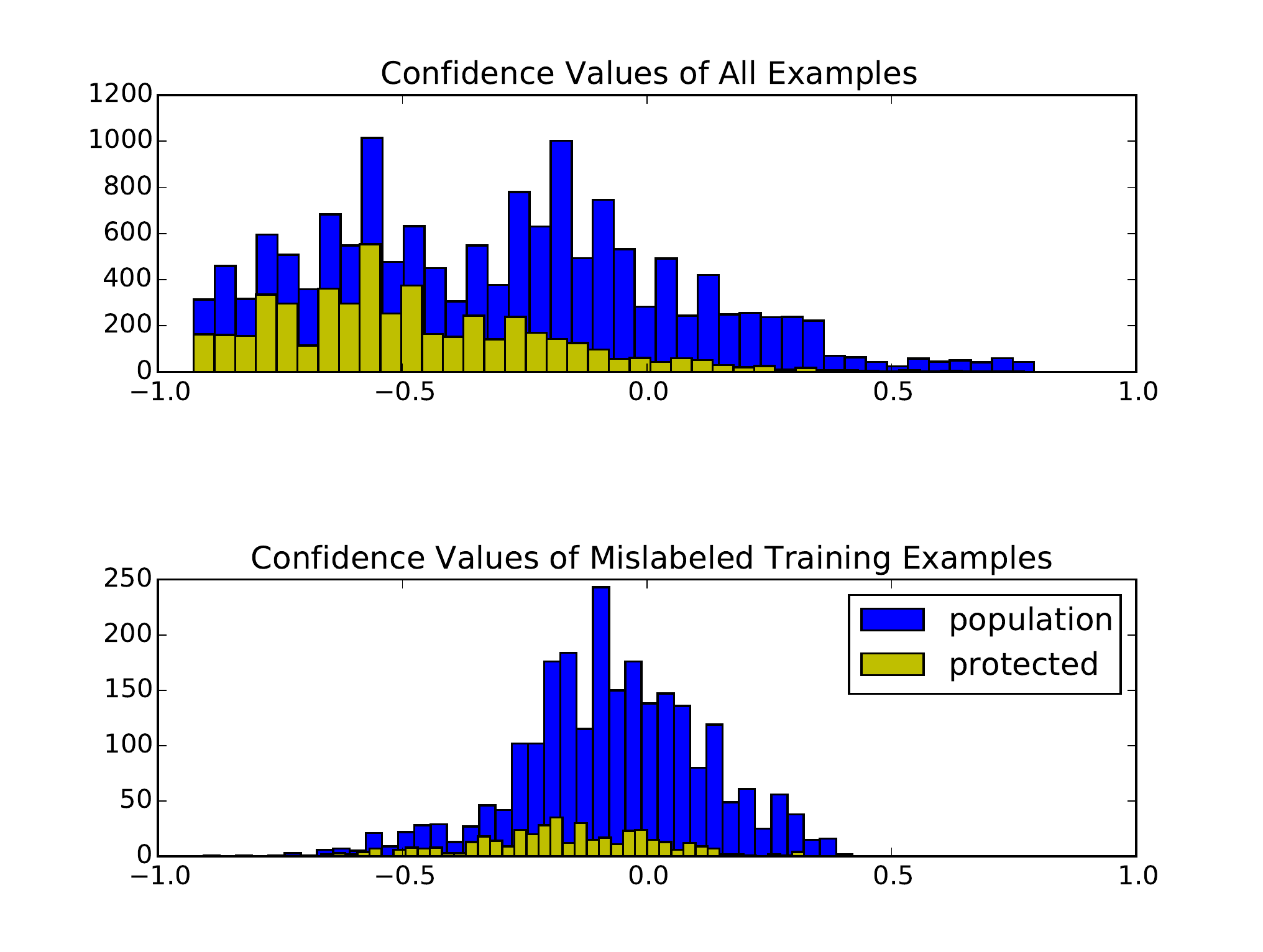}
\caption{Histogram of boosting confidences for the Census data set. The top
histogram shows the distribution of confidence values for the entire dataset,
and the bottom shows the confidence for only mislabeled examples. The vast
majority of women are classified as $-1$, and the incorrect classifications are
closer to the decision boundary.}
\label{fig:boosting-margins}
\end{figure}

The idea of a signed confidence generalizes nicely to other machine
learning algorithms. We study support vector machines (SVM) which have a
natural geometric notion of margin, and logistic regression which outputs a
confidence in its prediction. For background on SVM, logistic regression, and
AdaBoost, see~\cite{ShalevShwartzBD14}.

\subsection{Interpretations of signed confidence}
Here we state how signed confidence is defined for each of the learning
methods.

\subsubsection{AdaBoost}
Boosting algorithms work by combining \emph{base hypotheses}, ``rules of
thumb'' that have a fixed edge over random guessing, into highly accurate
predictors.  In each round, a boosting algorithm finds the base hypothesis that
achieves the smallest weighted error on the sample. It then increases the
weights of the incorrectly classified examples, thus forcing the base learner
to improve the classification of difficult examples. In this paper we study
AdaBoost, a ubiquitous boosting algorithm. For more on boosting, we refer the
reader to~\cite{SchapireF12}. 

Let $H$ be a set of base classifiers, and let $(\alpha_t, h_t)_{t=1}^T$ be the
weights and hypotheses output by AdaBoost after $T$ rounds.  The signed
confidence of the hypothesis is $\conf(\mathbf x) = \frac{\sum_{i=1}^T \alpha_i
h_i(\mathbf x)}{\sum_{i=1}^T \alpha_i}.$ In all of our experiments we boost
decision stumps for $T=20$ rounds.

\subsubsection{SVM} The soft-margin SVM of Vapnik~\cite{CortesV95} outputs a
maximum margin hyperplane $\mathbf{w}$ in a high-dimensional space implicitly
defined by a kernel $K$, and $\mathbf{w}$ can be expressed implicitly as a
linear combination of the input vectors, say $\mathbf{w'}$.  We define the
confidence as the distance of a point from the separating hyperplane, i.e.
$\conf(\mathbf x) = K (\mathbf{w'}, \mathbf x)$. For the Census Income and
Singles datasets we use the standard Gaussian kernel, and for the German
dataset we use a linear kernel (the datasets are described in
Section~\ref{sec:experiments}). 

\subsubsection{Logistic regression}
The classifier output by logistic regression has the form $$h(\mathbf
x)=\sign(\phi(\langle \mathbf w, \mathbf x\rangle) - 1/2)$$ where
$\phi(z)=\frac{1}{1+e^{-z}}$ is the logistic function, and the vector $\mathbf
w$ is found by empirical risk minimization (ERM) with the standard logistic
loss $\ell(\mathbf w, (\mathbf x,y))=\log(1+e^{-y\langle \mathbf w, \mathbf
x\rangle})$ and $L_2$ regularization. Here we define the confidence of logistic
regression simply as the value that the classifier takes before rounding:
$\conf(\mathbf x) = \phi(\langle \mathbf w, \mathbf x\rangle).$

\section{Methods and Technical Solutions} \label{sec:methods}

\subsection{Shifted decision boundary}\label{sec:sdb}

In this section we define our methods. In what follows $X$ is a labeled
dataset, $l(x)$ are the given labels, and $S \subset X$ is the protected group.
We further assume that members of $S$ are less likely than $S^C$ to have label
1.
First we describe our proposed method, called \emph{shifted decision
boundary} (SDB), and then we describe three techniques we use for baseline
comparisons (in addition to comparing to previous literature).

Let $\conf:X \to [-1,1]$ be a function corresponding to a classifier $h(x)=\sign(\conf(x))$,
and define the \emph{decision boundary shift of $\lambda$ for
$S$} as the classifier $h_\lambda: X \to \{-1,1\}$, defined as
$$
h_\lambda(x) = \begin{cases}
1    & \textup{ if } x \in S, \conf(x) \geq -\lambda \\ 
\sign(\conf(x)) & \textup{ otherwise.}
\end{cases}
$$
The SDB algorithm accepts as input confidences $\conf$ and finds the minimal error
decision boundary shift for $S$ that achieves statistical parity. That is,
given $\conf$ and $\varepsilon > 0$, it produces a value $\lambda$ such that $h_\lambda$
has minimal error subject to achieving statistical parity up to bias
$\varepsilon$. 

\subsection{Naive baseline algorithms}

We define two naive baseline methods which are intended to be both baseline
comparisons for our SDB algorithm and illustrations of the shortcomings of the
bias-error trade-off.

Similarly to SDB, the \emph{random relabeling} (RR) algorithm modifies a given
hypothesis $h$ by flipping labels. In particular, RR computes the probability
$p$ for which, if members of $S$ with label $-1$ under $h$ are flipped by $h'$
to $+1$ randomly and independently with probability $p$, the bias of $h'$ is
zero in expectation. The classifier $h'$ is then defined as the randomized
classifier that flips members of $S$ with label $-1$ with probability $p$ and
otherwise is the same as $h$.

Next, we define \emph{random massaging} (RM).  Massaging strategies, introduced
by~\cite{KamiranC09}, involve eliminating the bias of the training data by
modifying the labels of data points, and then training a classifier on this
data in the hope that the statistical parity of the training data will
generalize to the test set as well.  In our experiment, we massage the data
randomly; i.e.~we flip the labels of $S$ from $-1$ to $+1$ independently at
random with the probability needed to achieve statistical parity in
expectation, as in RR.

As we have already noted, these two baseline methods perform comparably to much
of the previous literature in both bias and error. This illustrates that the
semantics of \emph{why} an algorithm achieves statistical parity is crucial
part of its evaluation. As such, these two baselines can be useful for any
analysis that measures bias and accuracy. Moreover, they can
be used to determine the suitability of a new proposed measure of fairness.

\subsection{Fair weak learning}

Finally, we include a method which is based on a natural idea but is
empirically suboptimal to SDB. Recall that boosting works by combining weak
learners into a ``strong'' classifier.  It is natural to ask whether boosting
keeps the fairness properties of the weak learners. Weak learners used in
practice, such as decision stumps, have very low complexity, therefore it is
easy to impose fairness constraints on them. In our \emph{fair weak learning}
(FWL) baseline we replace a standard boosting weak learner with one which tries
to minimize a linear combination of error and bias and run the resulting
boosting algorithm unchanged. The weak learner we use computes the decision
stump which minimizes the sum of label error and bias of its induced
hypothesis. 

\subsection{Theoretical properties of SDB}\label{sec:sdbtheory}

Because the SDB method only flips the labels of examples with small signed
confidence, margin theory implies that it will not increase the error too much.
We formalize this precisely below. This theorem, a direct corollary of
Theorem~\ref{thm:margin-generalization}, provides strong theoretical
justification for our SDB method. To the best of our knowledge, SDB is the
first empirically tested method for fair learning that has any specific
guarantees for its accuracy.

Informally, the theorem says that when a majority voting scheme is
post-processed by the SDB technique, the resulting hypothesis maintains the
generalization accuracy bounds in terms of the margin on the sample when the
shift is small ($\lambda \leq \theta$). But as the shift grows, the error bound
increases proportionally to the fraction of the protected population that has
large enough negative margins (i.e., in $[-\lambda, -\theta]$).

\begin{theorem} 
Let $X$ be finite and $D,S,m,H$, and $d$ be as in
Theorem~\ref{thm:margin-generalization}. Let $T \subset S$ be the subset of the
sample in the protected class. Let $\delta > 0$. Let $\textup{err}(m)$ be the
tail error function from Theorem~\ref{thm:margin-generalization}. For any $A
\subset X$ let $A_{\lambda, \theta} = \{ a \in A : -\lambda \leq
\textup{conf}(a) \leq -\theta \}$. 

Then with probability at least $1-\delta$,
every function $h_\lambda$ post-processed by SDB with weighted majority vote $\conf(x)$ and shift
$\lambda > 0$ satisfies the following for every $\theta > 0$:
$$
\begin{aligned}\Pr_D[yh_\lambda(x) \leq 0] &\leq 
\Pr_{T_{\lambda, \theta}}[y\cdot\conf(x) \geq -\theta] \Pr_{S}[x \in T_{\lambda,
\theta}] \\ & + 
\Pr_{S - T_{\lambda, \theta}}[y\cdot\conf(x) \leq \theta] \Pr_{S}[x \not \in T_{\lambda,
\theta}] \\ & + 
\max ( \textup{err}(|T_{\lambda, \theta}|), \textup{err}(|T^C_{\lambda,
\theta}|))
\end{aligned}
$$  
\end{theorem}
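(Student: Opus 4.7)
The plan is to split the generalization error of $h_\lambda$ into a shift-zone part and its complement, reduce each to a margin event for the underlying weighted majority $\conf$, and apply Theorem~\ref{thm:margin-generalization} separately on each piece. Write $P$ for the protected group (denoted $S$ in Section~\ref{sec:sdb}) and set $P_{\lambda,\theta}=\{x\in P:\conf(x)\in[-\lambda,-\theta]\}$, the population analog of $T_{\lambda,\theta}$. I would first decompose
$$
\Pr_D[y h_\lambda\le 0]=\Pr_D[y h_\lambda\le 0,\,x\in P_{\lambda,\theta}]+\Pr_D[y h_\lambda\le 0,\,x\notin P_{\lambda,\theta}].
$$

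On $P_{\lambda,\theta}$, SDB deterministically predicts $+1$, so a mistake there is equivalent to $y=-1$, which (since $\conf\le-\theta<0$ on this set) coincides with the large-margin event $\{y\,\conf\ge\theta\}$. Off $P_{\lambda,\theta}$, assuming the interesting regime $\theta\le\lambda$, a short case analysis on $x\notin P$, on $x\in P$ with $\conf>-\theta$, and on $x\in P$ with $\conf<-\lambda$ shows that every $h_\lambda$-mistake satisfies the small-margin event $\{y\,\conf\le\theta\}$. This yields the pointwise bound
$$
\Pr_D[y h_\lambda\le 0]\le\Pr_D[x\in P_{\lambda,\theta},\,y\,\conf\ge\theta]+\Pr_D[x\notin P_{\lambda,\theta},\,y\,\conf\le\theta].
$$
I would then apply Theorem~\ref{thm:margin-generalization} to each term conditionally, using that, conditional on the realized split sizes $|T_{\lambda,\theta}|$ and $|T^C_{\lambda,\theta}|$, each sub-sample is i.i.d.\ from the corresponding conditional distribution $D\mid x\in P_{\lambda,\theta}$ or $D\mid x\notin P_{\lambda,\theta}$. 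This turns each conditional population probability into its empirical counterpart plus an err term of size $\textup{err}(|T_{\lambda,\theta}|)$ or $\textup{err}(|T^C_{\lambda,\theta}|)$; multiplying by the prior masses $\Pr_S[x\in T_{\lambda,\theta}]$ and $\Pr_S[x\notin T_{\lambda,\theta}]$ restores the joint empirical form in the statement, a union bound over the two applications preserves the $1-\delta$ probability, and bounding the sum of the two err's by their maximum recovers the final error term.

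The hardest step is that Theorem~\ref{thm:margin-generalization} is one-sided---it bounds a $0$-margin population event by a $\theta$-margin empirical event---while the shift-zone term requires controlling a large-positive-margin population event $\{y\,\conf\ge\theta\}$. I would handle this by invoking the theorem on the pair $(\conf,-y)$, which turns $\{y\,\conf\ge 0\}$ into the standard misclassification event $\{(-y)\,\conf\le 0\}$ and produces precisely the $\Pr_{T_{\lambda,\theta}}[y\,\conf\ge-\theta]$ term appearing in the statement; on $P_{\lambda,\theta}$ the events $\{y\,\conf\ge\theta\}$, $\{y\,\conf\ge 0\}$, and $\{y=-1\}$ coincide, so no slack is lost. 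A secondary subtlety is that $P_{\lambda,\theta}$ depends on the learned $\conf$, so the conditional-i.i.d.\ arguments need the ``for every weighted majority'' uniformity of Theorem~\ref{thm:margin-generalization} to survive restriction to sample-measurable sub-populations; the assumed finiteness of $X$ makes this bookkeeping routine.
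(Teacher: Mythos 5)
Your proposal is correct and follows essentially the same route as the paper's (very terse) proof: condition on whether $h_\lambda$ flips the label, i.e., on membership in the shift zone $T_{\lambda,\theta}$, and apply Theorem~\ref{thm:margin-generalization} twice, once to the negated problem to control the large-positive-margin event on the flip zone. The only cosmetic difference is that you negate the labels $y \mapsto -y$ where the paper instead observes that $-\conf$ is itself a weighted majority function; these are equivalent, and your filled-in case analysis and weighted-sum-to-max step are the details the paper leaves implicit.
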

\begin{proof}
The bound follows by conditioning on the event that $h_\lambda$ flips the
label, noticing $-\conf(x)$ is also a majority function, and applying
Theorem~\ref{thm:margin-generalization} twice.  
\end{proof}

\subsection{Resilience to random bias}\label{sec:rrb}
One of the biggest challenges for designers of fair learning algorithms is
the lack of good measures of fairness. The most popular measures are
statistical measures of bias such as statistical parity. As Dwork et
al.~\cite{DworkHPR12} have pointed out, statistical parity fails to capture all
important aspects of fairness. In particular, it is easy to achieve statistical
parity simply by flipping the labels of an arbitrary set of individuals in the
protected class. A real-world example would be giving a raise to a random group
of women to eliminate the gender disparity in wages.  The root cause of this
problem is that one does not have access to reliable (unbiased) ground truth
labels. We propose to compensate for this by evaluating algorithms on synthetic
bias. In doing this we make transparent the \emph{kind} of bias a claimed
``fair'' algorithm protects against, and we can accurately measure its
resilience to said bias.

We introduce a new notion of fairness called \emph{resilience to random bias}
(RRB). Informally we introduce a new, random feature which has no correlation
with the target attribute, and then we introduce bias against individuals which
have a certain value for this new feature.  We call an algorithm fair if it can
recover the original, unbiased labels. For RRB in particular, the synthetic
bias is i.i.d. random against the protected group.  

Certainly, in practice, bias may not be of this form and we do not pretend that this notion captures all forms of bias.
Rather, this notion seeks to model a comparatively mild form of bias --
if an algorithm cannot recover from this type of random bias against a protected class then
there is little reason to think it can handle other types of bias.
In other words, we propose this as a minimally necessary condition but not necessarily a sufficient condition for individual fairness.
Relating our RRB measure more formally to other notions of individual fairness is left for future work.

We formally define RRB as follows. Let $X$ be a set of examples and $D$ be a
distribution over examples, with $l:X \to \{-1,1\}$ a target labeling function.
We first define a randomized process mapping $(X,D,l) \to (\tilde X, \tilde D,
\tilde l)$. Let $\tilde X = X \times \{ -1,1 \}$ and $\tilde D$ be the
distribution on $\tilde X$ which is independently $D$ on the $X$ coordinate and
uniform on the $\{-1,1\}$ coordinate. Denote by $\tilde X_0 = \{ (x,b) \in
\tilde X \mid b=0 \}$ and call this the \emph{protected set}. Finally, $\tilde
l(x,b)$ is \emph{fixed} to either $l(x)$ or $-l(x)$ independently at random
for each $(x,b) \in \tilde X$ according to the following: $$ \Pr[\tilde l(x,b)
= l(x)] = \begin{cases} 1      & \textup{ if } b=1 \textup{ or } l(x) = -1 \\
1-\eta & \textup{ if } b=0 \textup{ and } l(x) = 1 \\ \end{cases}.  $$

In other words, the positive labels of a randomly chosen protected subgroup are
flipped to negative independently at random with probability $\eta$. We
emphasize that the process mapping $l \mapsto \tilde l$ is randomized, but the
map $\tilde l(x,b)$ itself is fixed and deterministic. So an algorithm which
queries labels from $\tilde l$ is given consistent answers. Now we define the
resilience to random bias as follows:

\begin{definition} \label{def:rrb}
Let $(X,D,l), (\tilde X, \tilde D, \tilde l)$ be as above. Let $h = A(\tilde D,
\tilde l)$ be the output classifier of a learning algorithm $A$ when given
biased data as input. The \emph{resilience to random bias} (RRB) of $A$ with
respect to $(X,D,l)$ and discrimination rate $0 \leq \eta < 1/2$, denoted
$\textup{RRB}_{\eta}(A)$, is
$$
\textup{RRB}_{\eta}(A) = \Pr_{\tilde D} [h(x,b) = l(x) \mid b = 0, l(x) = 1]
$$
\end{definition}

Similarly to calculating statistical parity, RRB is estimated on a fixed
dataset by simulating the process described above and outputing an empirical
average. 

\section{Empirical Evaluation} \label{sec:experiments}

We measure our methods on label error, statistical parity, and RRB with $\eta =
0.2$. In all of our experiments we split the datasets randomly into training,
test, and model-selection subsets, and we output the average of 10
experiments.\footnote{The code is available for reproducibility
at~\url{https://github.com/j2kun/fkl-SDM16}}

\subsection{Datasets}

The Census Income dataset~\cite{Lichman13}, extracted from the 1994 Census
database, contains demographic information about $48842$ American adults.  The
prediction task is to determine whether a person earns over \$50K a year.  The
dataset contains $16,192$ females ($33\%$) and $32,650$ males. Note $30.38\%$
of men and $10.93\%$ of women reported earnings of more than \$50K, therefore
the bias of the dataset is $19.45\%$.  

The German credit dataset~\cite{Lichman13} contains financial information about
1000 individuals who are classified into groups of good and bad credit risk.
The ``good'' credit group contains $699$ individuals. Following the work
of~\cite{KamiranC09}, we consider age as the protected attribute with a cut-off
at $25$. Only $59\%$ of the younger people are considered good credit risk,
whereas of the $25$ or older group, $72\%$ are creditworthy, making the bias
$13\%$.

In the Singles dataset, extracted from the marketing dataset
of~\cite{HastieTF09} by taking all respondents who identified as ``single,''
the goal is to predict whether annual income of a household is greater than
\$25K from 13 other demographic attributes.  The protected attribute is gender.
The dataset contains $3,653$ data points, $1,756$ ($48\%$) of which belong to
the protected group. $34\%$ of the dataset has a positive label. The bias is
$9.8\%$.

\subsection{Results and analysis}\label{sec:results}

\begin{figure*}[t]
\centering
\begin{subfigure}{0.65\columnwidth}
\includegraphics[width=\columnwidth]{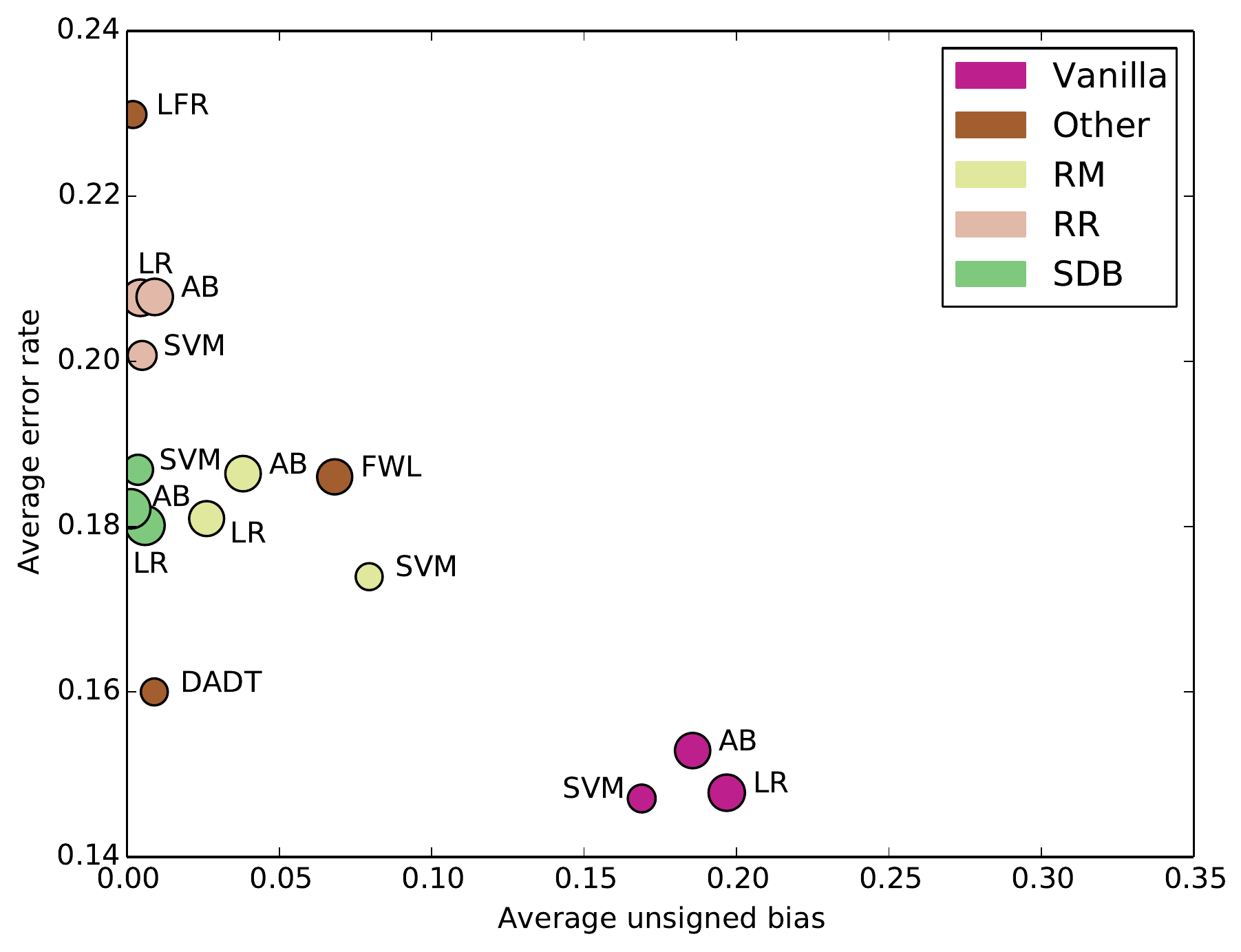}
\end{subfigure}
\hspace{1mm}
\begin{subfigure}{0.65\columnwidth}
\includegraphics[width=\columnwidth]{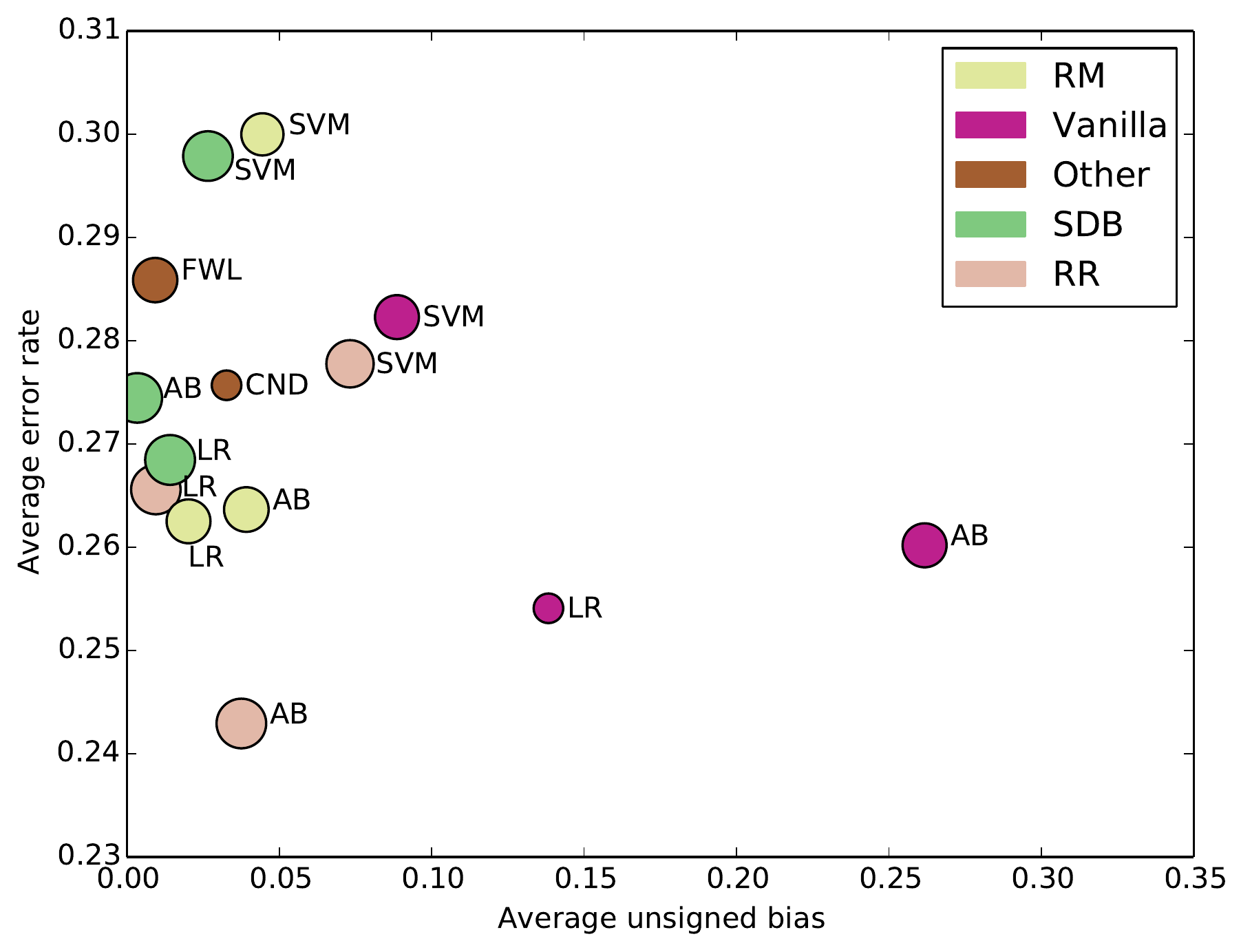}
\end{subfigure}
\hspace{1mm}
\begin{subfigure}{0.65\columnwidth}
\includegraphics[width=\columnwidth]{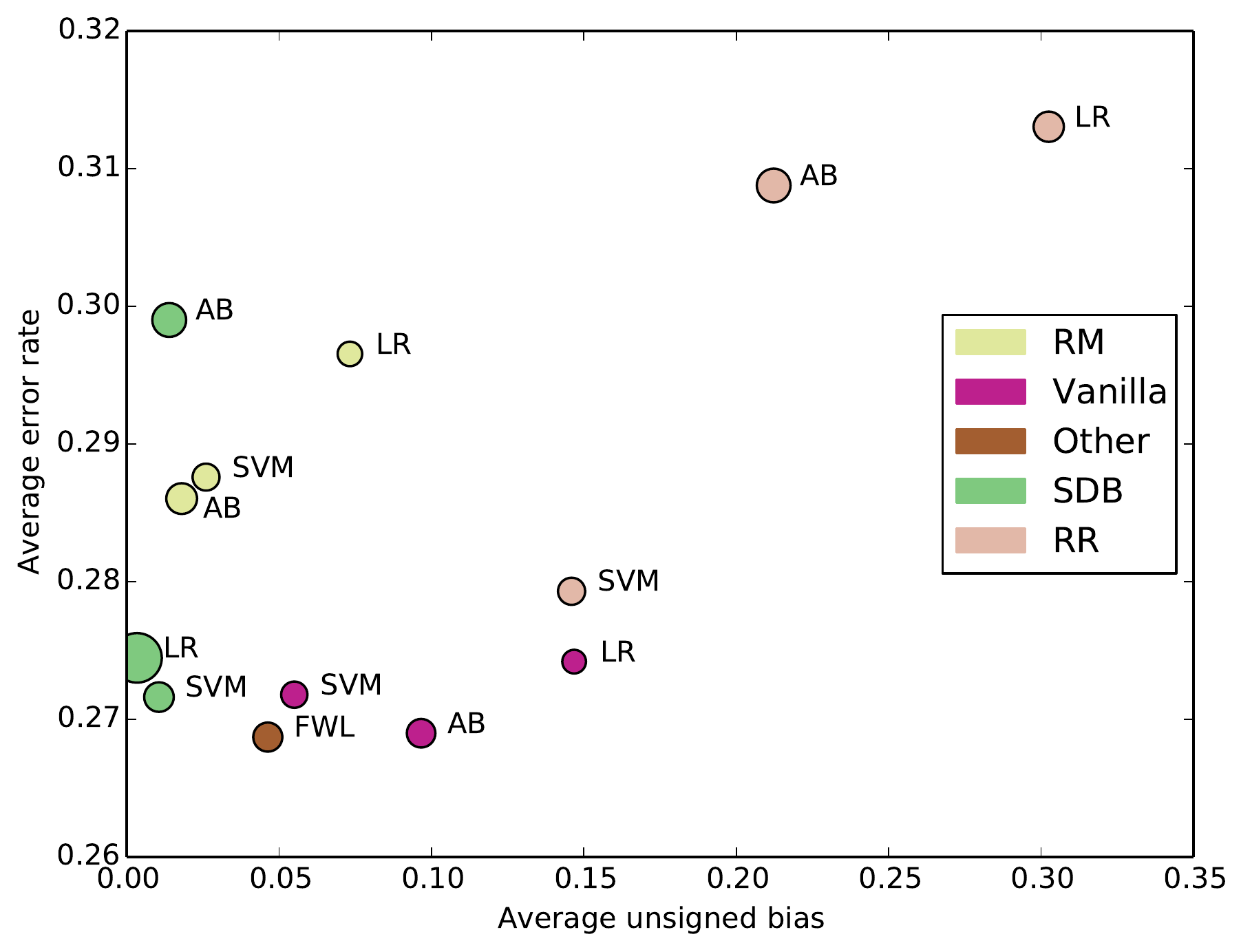}
\end{subfigure}
\caption{A summary of our experimental results for our three datasets, from
left to right: Census Income, German, Singles.
Labels show which learning algorithm is used and the colors give which method for achieving fairness was used.
The parameters of each algorithm
were chosen to minimize bias. The size of a point is proportional to the RRB of
the learner (only for those algorithms for which we have the RRB numbers),
where larger dots mean there is a larger probability of correcting
a label.} 
\label{fig:scatterplots} 
\end{figure*}

\begin{table}[h]
\centering
\begin{tabular}{| c | ccc |}
\hline
               Method    & Census & German & Singles \\ 
\hline 
               SVM       & 0.2702 & 0.6756 & 0.2424  \\ 
               SVM (RR)  & 0.2821 & 0.7827 & 0.2588  \\ 
               SVM (RM)  & 0.2545 & 0.6232 & 0.2552  \\ 
               SVM (SDB) & \textbf{0.3172} & \textbf{0.8619} & \textbf{0.3064}  \\ 
\hline
               LR        & 0.4647 & 0.3070 & 0.1971  \\ 
               LR (RR)   & 0.4696 & 0.8564 & 0.3213  \\ 
               LR (RM)   & 0.4282 & 0.6741 & 0.2117  \\ 
               LR (SDB)  & \textbf{0.5402} & \textbf{0.8687} & \textbf{0.8596}  \\ 
\hline
               AB        & 0.4372 & 0.6774 & 0.2864  \\ 
               AB (RR)   & 0.4661 & \textbf{0.8629} & 0.3996  \\ 
               AB (RM)   & 0.4410 & 0.6965 & 0.3325  \\ 
               AB (SDB)  & \textbf{0.5461} & 0.8596 & \textbf{0.4027}  \\ 
               AB (FWL)  & 0.5174 & 0.6879 & 0.2971  \\ 
\hline
\end{tabular}
\caption{The RRB numbers for each of our methods and baselines. In each column
and section the largest values are shown in bold, and they are almost always
SDB.}
\label{table:rrb}
\end{table}

In this section we state our experimental results. They are summarized in
Figure~\ref{fig:scatterplots} for the Census Income, German, and Singles
datasets, and the full set of numbers are in Tables~\ref{table:census_results},~\ref{table:german_results}, and~\ref{table:singles_results} respectively.  For comparison,
we also included the numbers for the Learning Fair Representations (LFR) method
of~\cite{ZemelWSPD13} for the Census Income dataset, for Classification with No
Discrimination (CND) method of~\cite{KamiranC09}, and for the Discrimination
Aware Decision Tree (DADT) technique of~\cite{KamiranCP10} (specifically we use
the numbers for the ``IGC+IGS\_Relab'' method). In~\cite{ZemelWSPD13} the
authors implemented three other learning algorithms, these are unregularized
logistic regression, Fair Naive-Bayes \cite{KamiranC09}, and Regularized
Logistic Regression \cite{KamashimaAS11}. These methods all had errors above
$20\%$ on the Census dataset and so we omit them for brevity.
In~\cite{KamiranCP10} the authors implemented variations on the decision tree
learning scheme, and the one we include has the highest accuracy, though they
are all closely comparable.  We reported all biases as unsigned. We were unable to access
implementations of the prior authors' algorithms, so we were not able to
reproduce their results or measure their algorithms with respect to RRB.

To investigate the trade-offs made by our SDB
method more closely,
Figures~\ref{fig:adult_tradeoffs},~\ref{fig:german_tradeoffs},
and~\ref{fig:singles_tradeoffs} show the rate at which error increases as bias
goes to zero.  In many cases, a substantial reduction in bias can be achieved before there is any significant drop-off in accuracy.

For the Census Income dataset, the three SDB techniques outperform the
baselines and outperform all the prior literature except for DADT. Both SDB
algorithms achieve statistical parity with about $18\%$ error. Moreover, these
two SDB algorithms have the highest RRB, while SVM appears to overfit the
random bias introduced by RRB more than the other algorithms. While DADT
appears to achieve lower label error and comparable bias, we note that the
standard deviation of the bias reported in~\cite{KamiranCP10} is 0.015 while for SDB (on the Census Income dataset) the standard deviations are at
least one order of magnitude smaller. 

The singles dataset shows a similar pattern, with SDB combined with logistic
regression outperforming all other baselines. Note that in the instances where
the baselines perform comparably to SDB, SDB tends to have a much larger
resilience to random bias.

The German dataset is more puzzling. While two of the SDB techniques
outperform the prior literature by a moderate margin, they do not 
outperform random relabeling or random massaging by a significant
margin (and these baselines already outperform CND). Another curious
observation is that label error stays
constant as the decision boundary is shifted, as Figure~\ref{fig:german_tradeoffs} shows.

Note again the difference in SVM kernels between the datasets. The Gaussian
kernel performs well for the Census Income and Singles dataset. However, in
the case of the German dataset, which is the smallest of the three, with the
Gaussian kernel every point becomes a support vector. This is not only a clear
sign of overfitting but it also makes SDB useless since the model gives the
same confidence for almost every data point.

These facts seem to be
evidence that the German dataset (which has only about a thousand records) is
too small to draw a significant conclusion. We nevertheless include it here for
completeness and to show comparison with the previous literature.

Fair weak learning (FWL) does empirically reduce bias but does not achieve
statistical parity in two of the three data sets.  FWL performs worse
on either label error or bias on each of the data sets and the trade-off
between label error and bias cannot easily be controlled. It also does not seem
easy to control this trade-off using either random massaging and random
relabeling.
 
One notable advantage of SDB is that the trade-off between label error and bias
can be controlled \emph{after} training.  To decide how much bias and error we
want to allow, we do not have to pick a hyper-parameter before training the
algorithm, unlike for most other fair learning methods. This means that the
computational cost of choosing the best point on the trade-off curve is very
low, and the trade-off is transparent. 

The results also highlight the usefulness of RRB as a measure of fairness. The
RRB values across all datasets and algorithms we studied are in
Table~\ref{table:rrb}.  In cases where random relabeling or random massaging
performs comparably to SDB, the RRB measure is able to distinguish them, giving
a lower score to the less reasonable baselines and a higher score to SDB.
This suggests that
the performance of fair learning algorithms should not be evaluated solely by
their accuracy and bias.

\section{Significance and Impact}\label{sec:discussion}
In this paper, we introduced a general method for balancing discrimination and
label error. This method, which we call shifted decision boundary (SDB), is
applicable to any learning algorithm which has an efficiently computable
measure of confidence. We studied three such algorithms -- AdaBoost, support
vector machines, and linear regression -- compared our methods to other methods
proposed in the earlier literature and our own baselines, and empirically
evaluated our methods' performances in terms of their resilience to random bias. 

Our method, in addition to outperforming much of the previous literature, has
several other desirable properties. Unlike most other fair learning algorithms,
SDB applied to AdaBoost has theoretical bounds on generalization error.
Also, since the margin shift can be specified after the
original learner has been trained on the data, a practitioner can easily
evaluate the trade-off between error and bias and choose the most desirable
point on the trade-off curve. This makes SDB a fast and transparent way to study
the fairness properties of an algorithm.

Our resilience to random bias (RRB) measure is a novel approach to evaluate the
fairness of a learning algorithm. Although i.i.d.~random bias is a simplified
model of real-world discrimination, we posit that any algorithm which can be
considered fair must be fair with respect to RRB. Moreover, RRB generalizes to
an arbitrary distribution over the input data, and one could adapt it to
well-studied models of bias in social science.

\section*{Acknowledments}
We would like to thank Lev Reyzin for helpful discussions.

\begin{figure*}[t]
\centering
\begin{subfigure}{.7\columnwidth}
\includegraphics[width=\columnwidth]{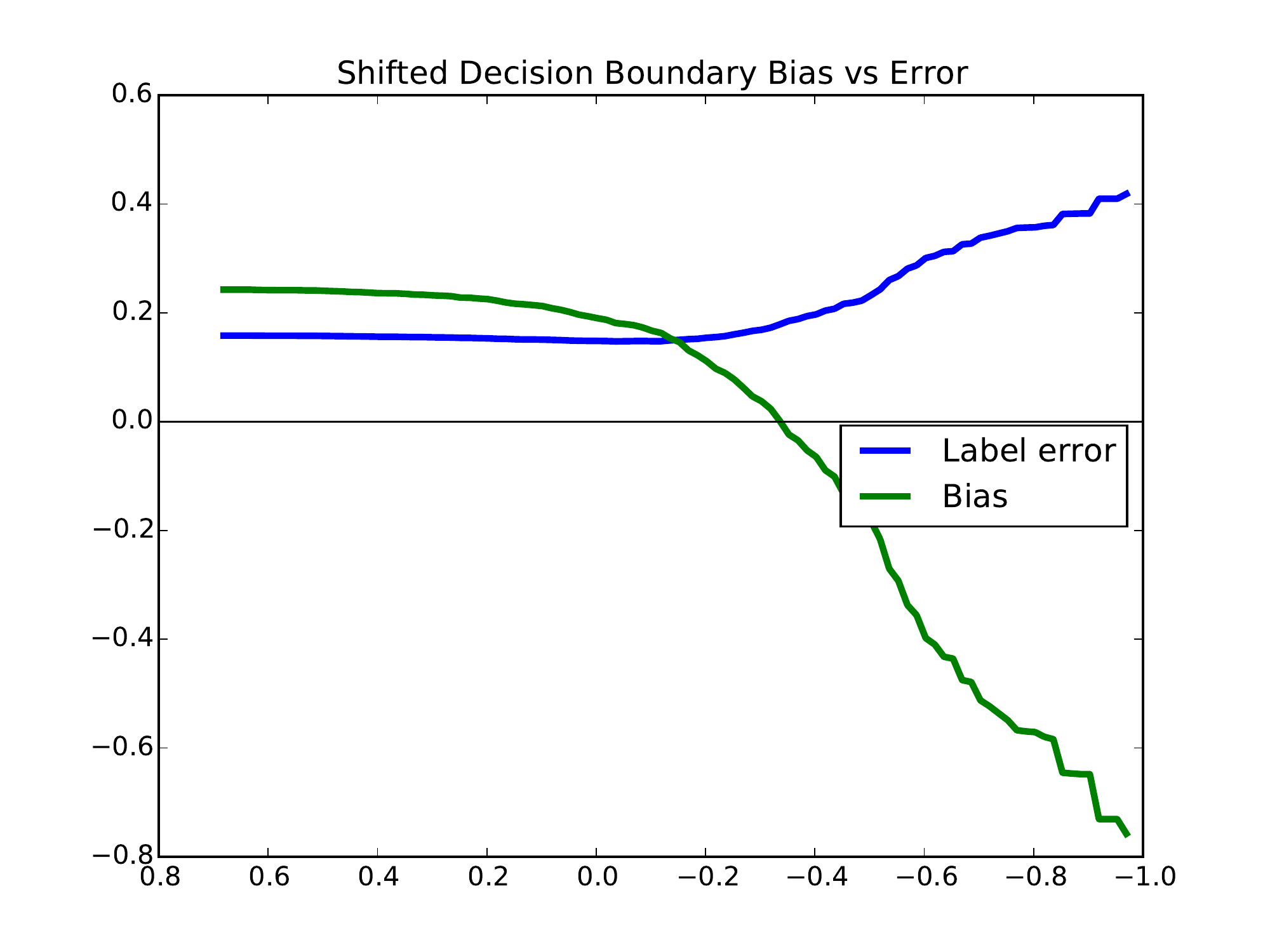}%
\caption{Boosting}%
\label{fig:adult_boosting_tradeoff}%
\end{subfigure}
\begin{subfigure}{.7\columnwidth}
\includegraphics[width=\columnwidth]{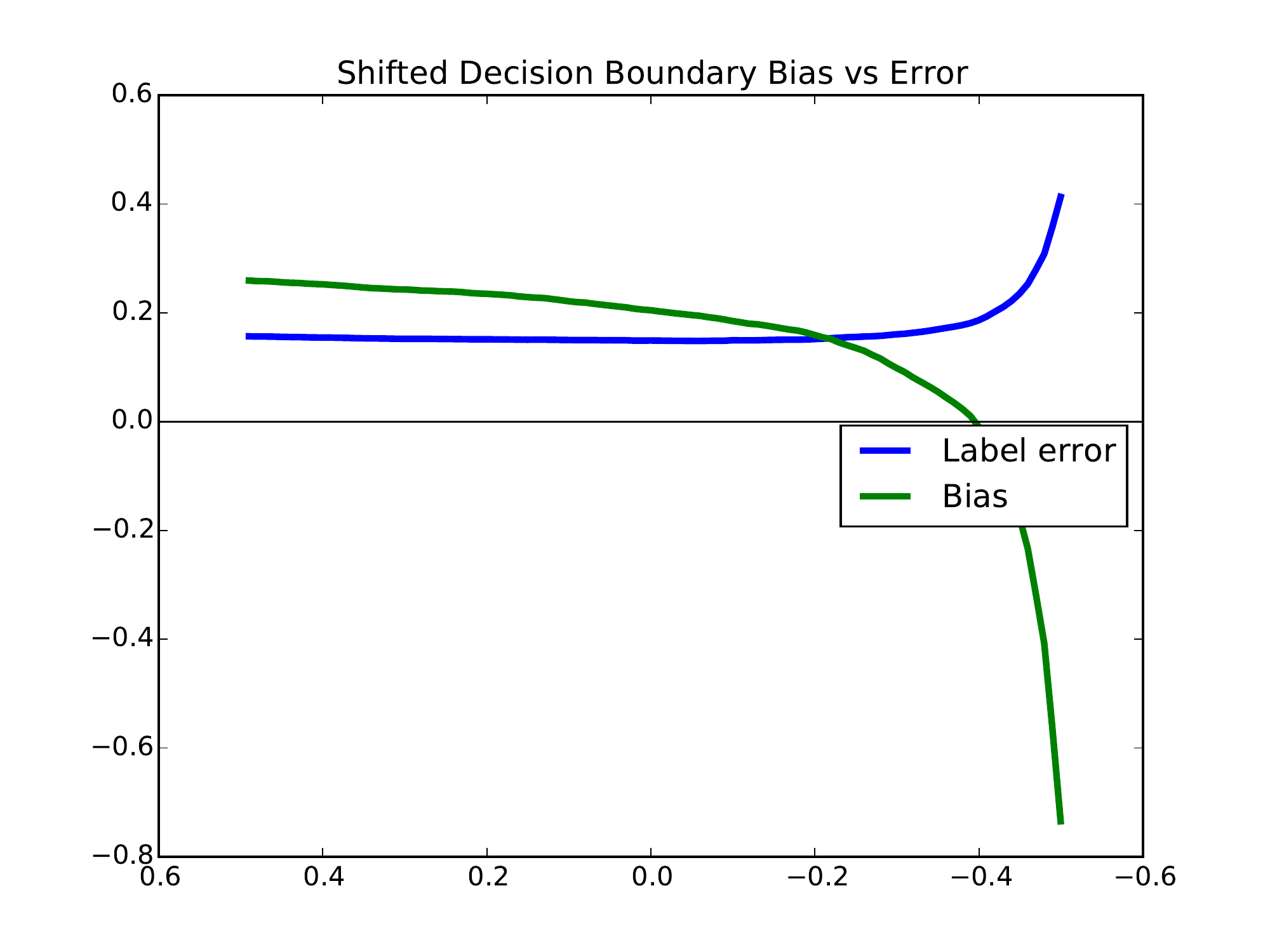}%
\caption{Logistic Regression}%
\label{fig:adult_lr_tradeoff}%
\end{subfigure}
\begin{subfigure}{.7\columnwidth}
\includegraphics[width=\columnwidth]{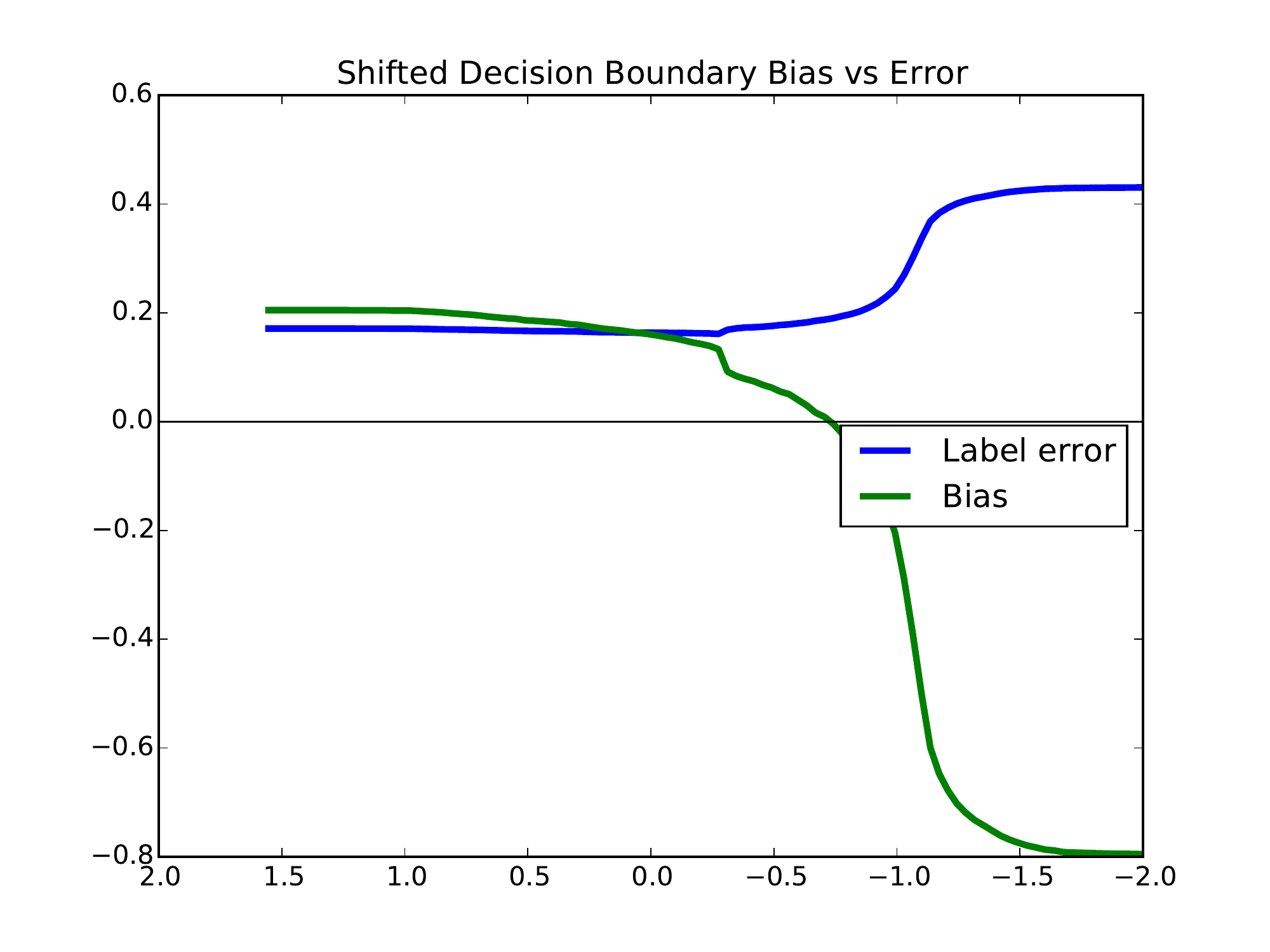}%
\caption{SVM}%
\label{fig:adult_svm_tradeoff}%
\end{subfigure}%
\caption{Trade-off between (signed) bias and error for SDB on the Census Income data. The horizontal axis is the threshold used for SDB.}
\label{fig:adult_tradeoffs}
\end{figure*}

\begin{figure*}[t]
\centering
\begin{subfigure}{.7\columnwidth}
\includegraphics[width=\columnwidth]{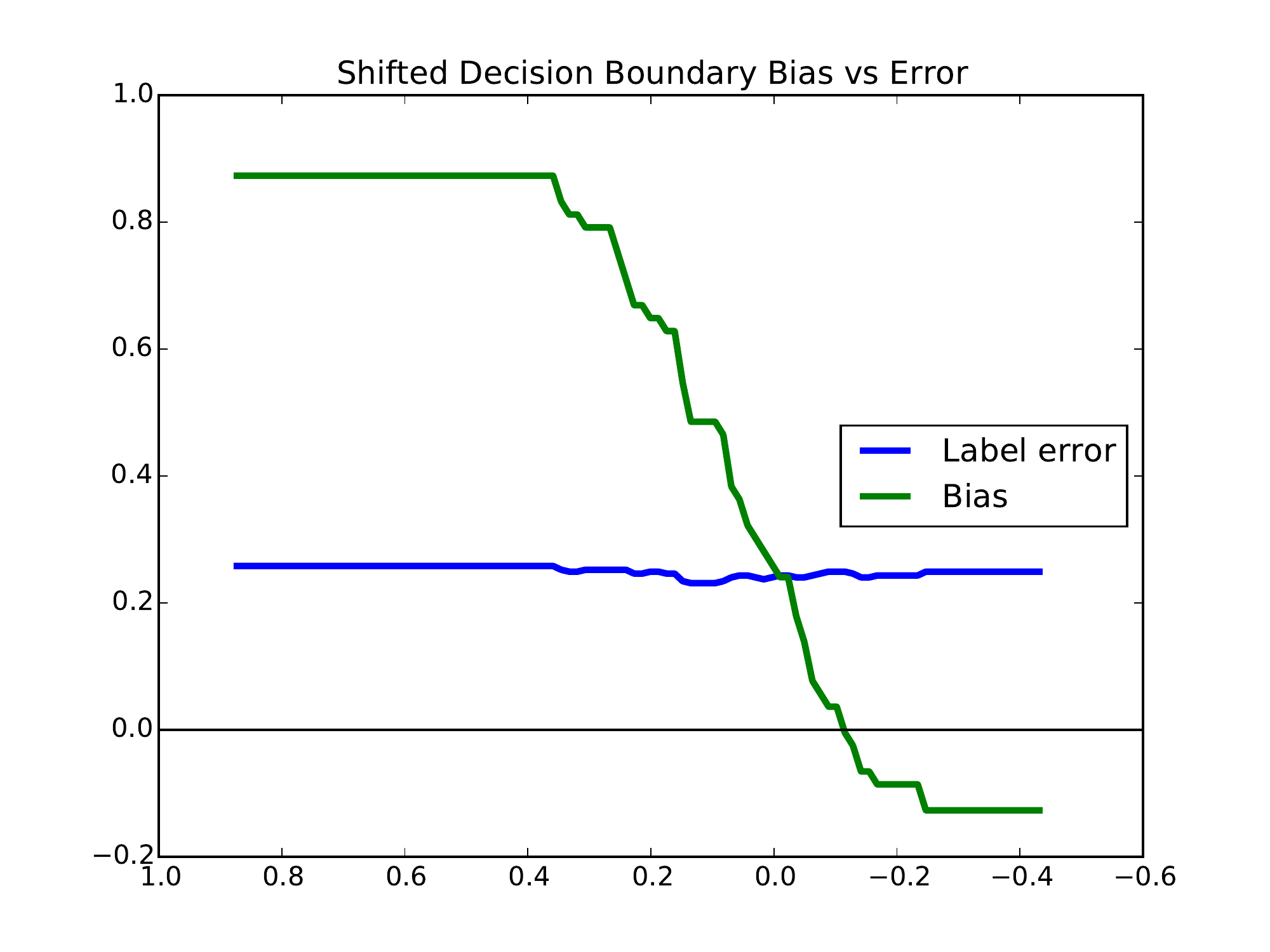}%
\caption{Boosting}%
\label{fig:german_boosting_tradeoff}%
\end{subfigure}
\begin{subfigure}{.7\columnwidth}
\includegraphics[width=\columnwidth]{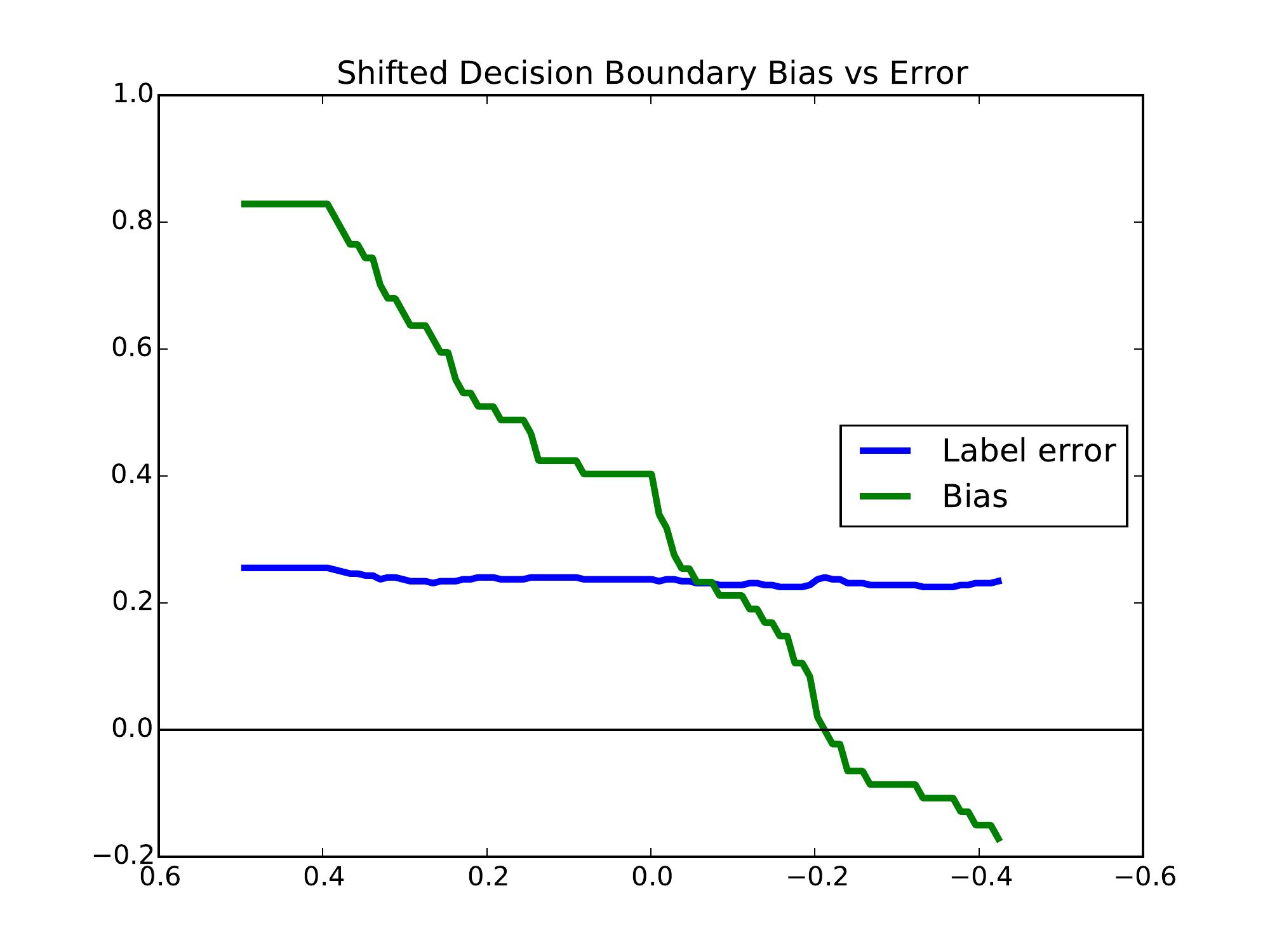}%
\caption{Logistic Regression}%
\label{fig:german_lr_tradeoff}%
\end{subfigure}
\begin{subfigure}{.7\columnwidth}
\includegraphics[width=\columnwidth]{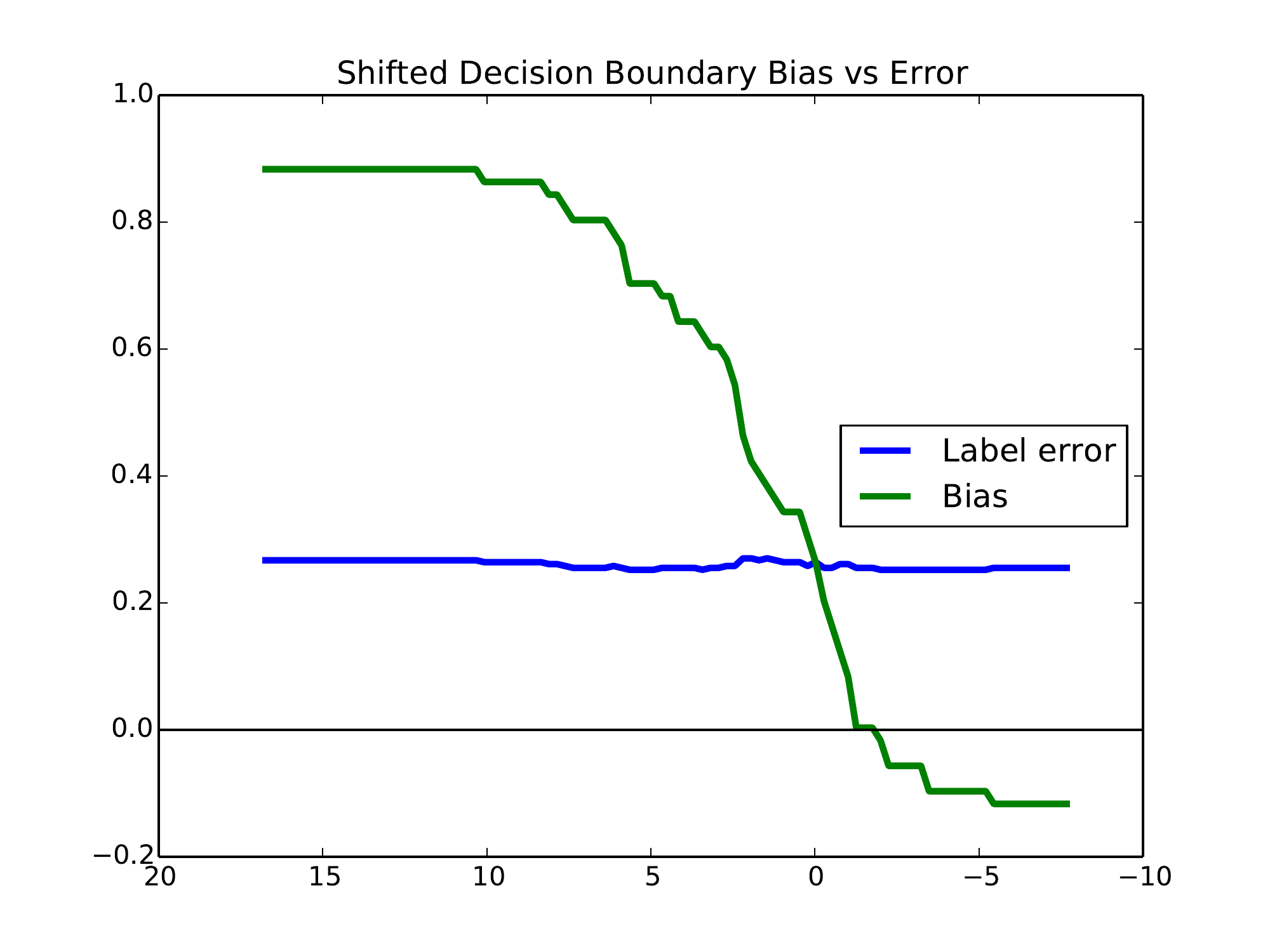}%
\caption{SVM}%
\label{fig:german_svm_tradeoff}%
\end{subfigure}%
\caption{Trade-off between (signed) bias and error for SDB on the German data. The horizontal axis is the threshold used for SDB.}
\label{fig:german_tradeoffs}
\end{figure*}

\begin{figure*}[t]
\centering
\begin{subfigure}{.7\columnwidth}
\includegraphics[width=\columnwidth]{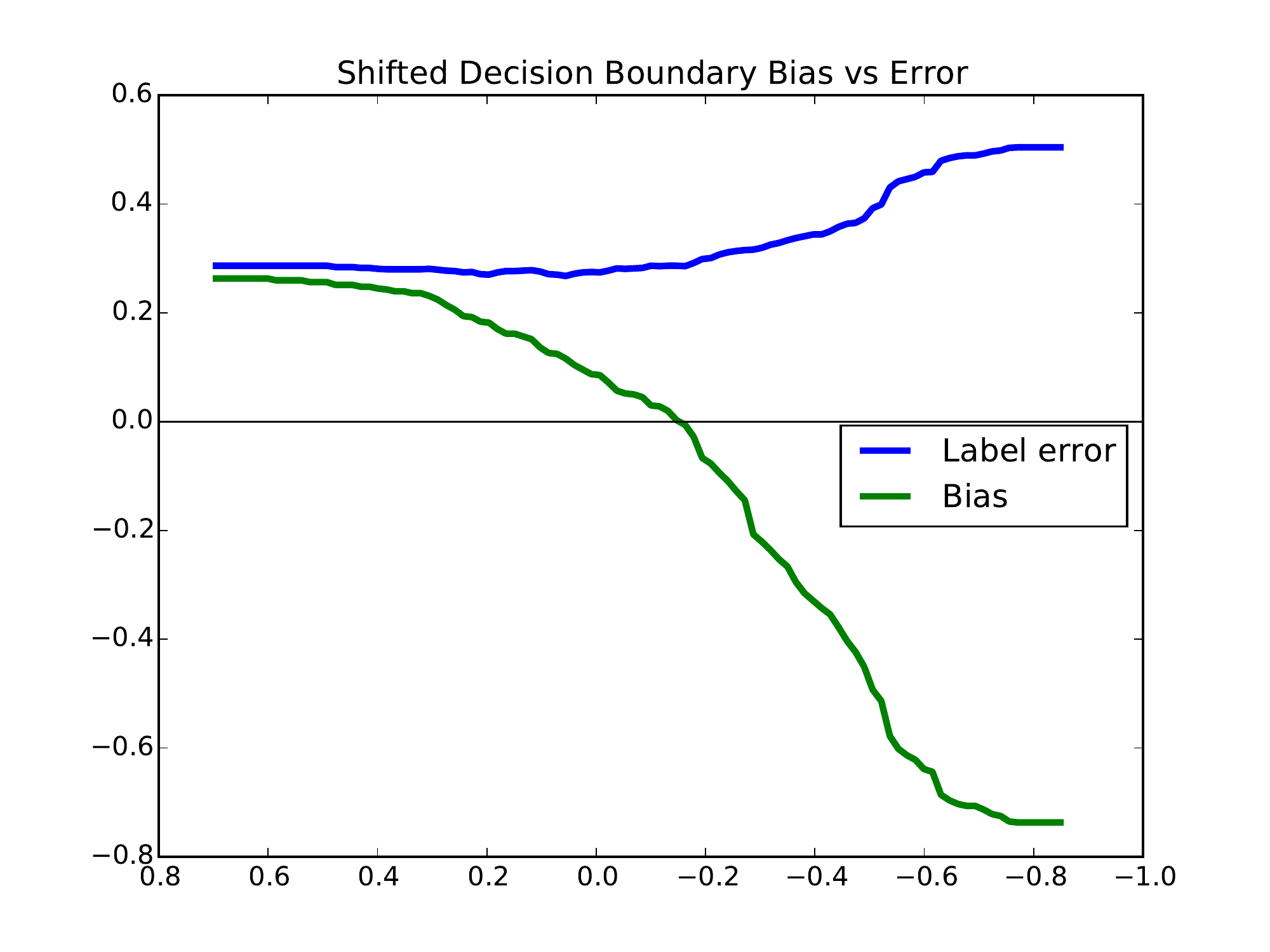}%
\caption{Boosting}%
\label{fig:singles_boosting_tradeoff}%
\end{subfigure}
\begin{subfigure}{.7\columnwidth}
\includegraphics[width=\columnwidth]{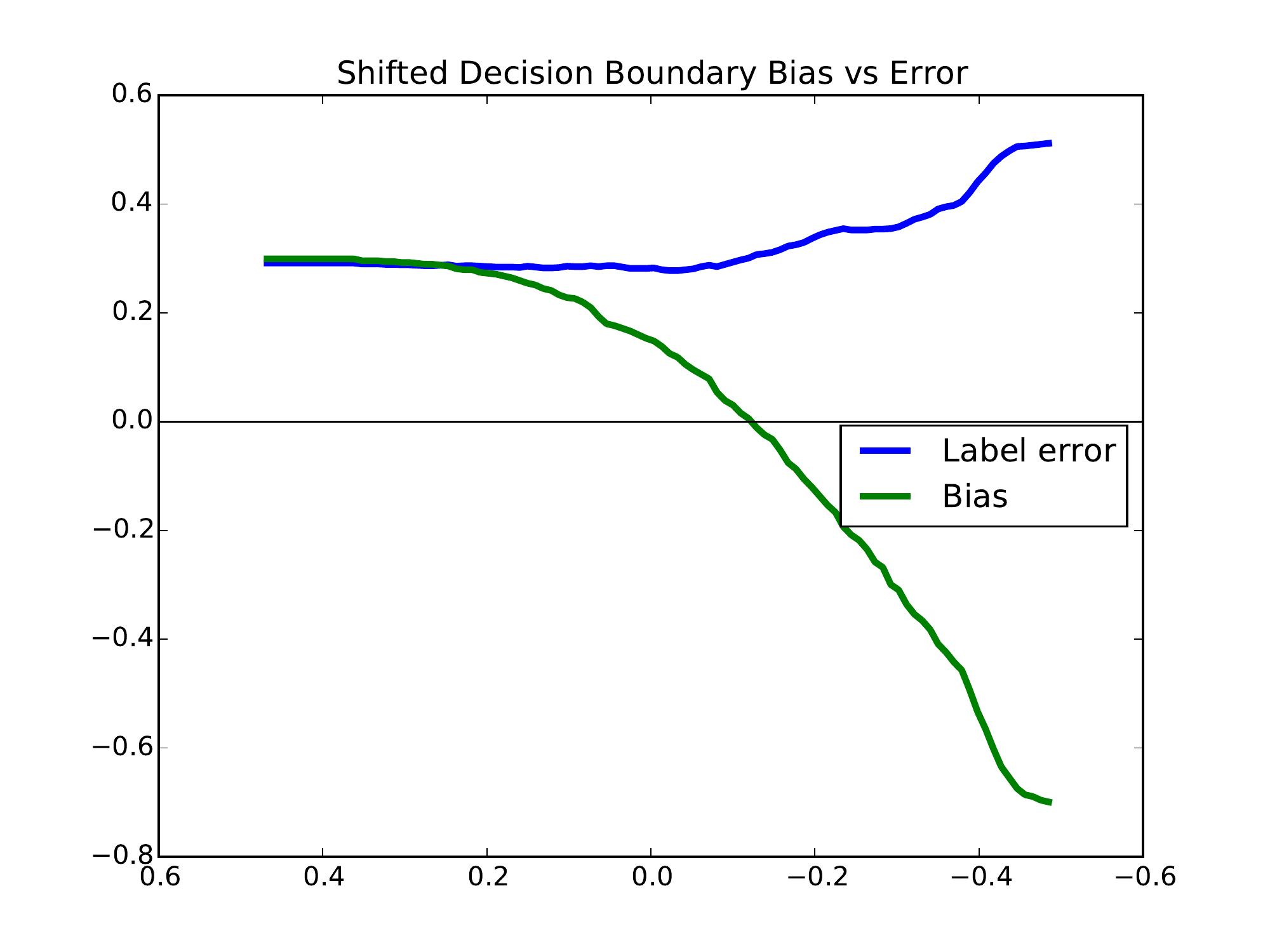}%
\caption{Logistic Regression}%
\label{fig:singles_lr_tradeoff}%
\end{subfigure}
\begin{subfigure}{.7\columnwidth}
\includegraphics[width=\columnwidth]{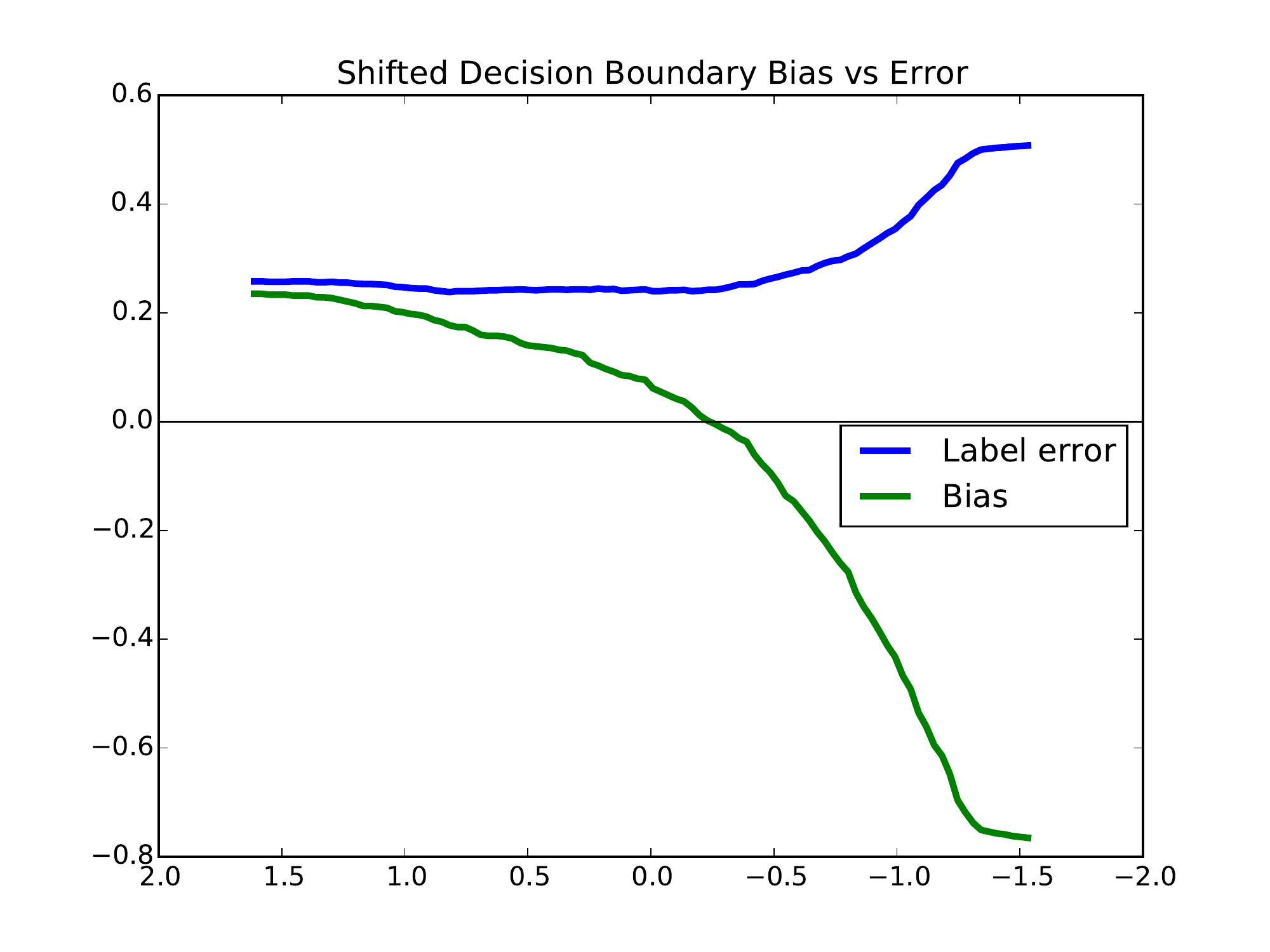}%
\caption{SVM}%
\label{fig:singles_svm_tradeoff}%
\end{subfigure}%
\caption{Trade-off between (signed) bias and error for SDB on the Singles data. The horizontal axis is the threshold used for SDB.}
\label{fig:singles_tradeoffs}
\end{figure*}

\bibliographystyle{plain}
\bibliography{main}

\begin{thebibliography}{10}

\bibitem{CaldersV10}
Toon Calders and Sicco Verwer.
\newblock Three naive bayes approaches for discrimination-free classification.
\newblock {\em Data Mining and Knowledge Discovery}, 21(2):277--292, 2010.

\bibitem{CortesV95}
Corinna Cortes and Vladimir Vapnik.
\newblock Support-vector networks.
\newblock {\em Machine learning}, 20(3):273--297, 1995.

\bibitem{DworkHPR12}
Cynthia Dwork, Moritz Hardt, Toniann Pitassi, Omer Reingold, and Richard Zemel.
\newblock Fairness through awareness.
\newblock In {\em Proceedings of the 3rd Innovations in Theoretical Computer
  Science Conference}, pages 214--226. ACM, 2012.

\bibitem{FriedlerSV14}
Michael Feldman, Sorelle~A. Friedler, John Moeller, Carlos Scheidegger, and
  Suresh Venkatasubramanian.
\newblock Certifying and removing disparate impact.
\newblock {\em Proceedings of the 21th ACM SIGKDD Intl. Conference on Knowledge
  Discovery and Data Mining}, pages 259--268, 2015.

\bibitem{HastieTF09}
Trevor Hastie, Robert Tibshirani, and Jerome Friedman.
\newblock {\em The Elements of Statistical Learning}, volume~2.
\newblock Springer, 2009.

\bibitem{KamiranC09}
Faisal Kamiran and Toon Calders.
\newblock Classifying without discriminating.
\newblock In {\em 2nd Intl. Conference on Computer, Control and Communication,
  2009.}, pages 1--6. IEEE, 2009.

\bibitem{KamiranCP10}
Faisal Kamiran, Toon Calders, and Mykola Pechenizkiy.
\newblock Discrimination aware decision tree learning.
\newblock In {\em 2010 IEEE 10th Intl. Conference on Data Mining (ICDM)}, pages
  869--874. IEEE, 2010.

\bibitem{KamiranZC13}
Faisal Kamiran, Indr{\.e} {\v{Z}}liobait{\.e}, and Toon Calders.
\newblock Quantifying explainable discrimination and removing illegal
  discrimination in automated decision making.
\newblock {\em Knowledge and Information Systems}, 35(3):613--644, 2013.

\bibitem{KamishimaAAS12}
Toshihiro Kamishima, Shotaro Akaho, Hideki Asoh, and Jun Sakuma.
\newblock Fairness-aware classifier with prejudice remover regularizer.
\newblock In {\em Machine Learning and Knowledge Discovery in Databases}, pages
  35--50. Springer, 2012.

\bibitem{KamashimaAS11}
Toshihiro Kamishima, Shotaro Akaho, and Jun Sakuma.
\newblock Fairness-aware learning through regularization approach.
\newblock In {\em Data Mining Workshops (ICDMW), 2011 IEEE 11th Intl.
  Conference on}, pages 643--650. IEEE, 2011.

\bibitem{Lichman13}
M.~Lichman.
\newblock {UCI} machine learning repository, 2013.

\bibitem{PodestaPMHZ14}
John Podesta, Penny Pritzker, Ernest~J. Moniz, John Holdren, and Jeffrey
  Zients.
\newblock Big data: Seizing opportunities, preserving values, 2014.

\bibitem{RomeiR14}
Andrea Romei and Salvatore Ruggieri.
\newblock A multidisciplinary survey on discrimination analysis.
\newblock {\em The Knowledge Engineering Review}, 29:582--638, 11 2014.

\bibitem{SchapireF12}
Robert~E. Schapire and Yoav Freund.
\newblock {\em Boosting: Foundations and Algorithms}.
\newblock MIT Press, 2012.

\bibitem{SchapireFBL98}
Robert~E. Schapire, Yoav Freund, Peter Bartlett, and Wee~Sun Lee.
\newblock Boosting the margin: A new explanation for the effectiveness of
  voting methods.
\newblock {\em Annals of Statistics}, pages 1651--1686, 1998.

\bibitem{ShalevShwartzBD14}
Shai Shalev-Shwartz and Shai Ben-David.
\newblock {\em Understanding Machine Learning: From Theory to Algorithms}.
\newblock Cambridge University Press, 2014.

\bibitem{ZemelWSPD13}
Rich Zemel, Yu~Wu, Kevin Swersky, Toni Pitassi, and Cynthia Dwork.
\newblock Learning fair representations.
\newblock In {\em Proceedings of the 30th International Conference on Machine
  Learning (ICML-13)}, pages 325--333, 2013.

\end{thebibliography}

\clearpage
\appendix

\begin{table*}[t]
\centering
\begin{tabular}{| c | ccccc |}
\hline
               & SVM & SVM (RR) & SVM (SDB) & SVM (RM) & LFR \cite{ZemelWSPD13} \\
\hline
label error    & 0.1471 (5.7e-17) & 0.2007 (0.002) & 0.1869 (0.004) & 0.1740 (0.003) & 0.2299 \\
bias           & 0.1689 (5.7e-17) & 0.0050 (0.003) & 0.0036 (0.009) & 0.0795 (0.010) & 0.0020 \\
RRB            & 0.2702 (0.014) & 0.2926 (0.004) & 0.3172 (0.025) & 0.2545 (0.007) & n/a \\
\hline
               & LR & LR (RR) & LR (SDB) & LR (RM) & DADT \cite{KamiranCP10} \\
\hline
label error    & 0.1478 (4.8e-04) & 0.2077 (0.004) & 0.1802 (0.002) & 0.1810 (0.003) & 0.1600 \\
bias           & 0.1968 (0.003) & 0.0044 (0.006) & 0.0060 (0.011) & 0.0262 (0.008) & 0.0090 (0.015) \\
RRB            & 0.4647 (0.013) & 0.4696 (0.009) & 0.5402 (0.011) & 0.4282 (0.019) & n/a \\
\hline
               & AdaBoost & AB (RR) & AB (SDB) & AB (RM) & AB (FWL)  \\
\hline
label error    & 0.1529 (0.002) & 0.2078 (0.004) & 0.1822 (0.005) & 0.1864 (0.004) & 0.1860 (0.004) \\
bias           & 0.1856 (0.012) & 0.0091 (0.006) & 0.0013 (0.007) & 0.0381 (0.013) & 0.0682 (0.004)  \\
RRB            & 0.4372 (0.032) & 0.4661 (0.019) & 0.5461 (0.015) & 0.4410 (0.013) & 0.4321 (0.016)  \\
\hline
\hline
\end{tabular}
\caption{A summary of our experimental results for the Census Income data for
relabeling, massaging, and the fair weak learner. The threshold for SDB was
chosen to achieve perfect statistical parity on the training data. Standard
deviations are reported in parentheses when known.}
\label{table:census_results}
\end{table*}

\begin{table*}[h]
\centering
\begin{tabular}{| c | ccccc |}
\hline
               & SVM & SVM (RR) & SVM (SDB) & SVM (RM) & CND \cite{KamiranC09} \\
\hline
label error    & 0.2823 (0) & 0.2778 (0.025) & 0.2979 (0.022) & 0.3000 (0.017) & 0.2757  \\
bias           & 0.0886 (4.2e-17) & 0.0732 (0.066) & 0.0266 (0.085) & 0.0445 (0.028) & 0.0327   \\
RRB            & 0.6756 (0.081) & 0.7827 (0.054) & 0.8619 (0.041) & 0.6232 (0.070) & n/a  \\
\hline
               & LR & LR (RR) & LR (SDB) & LR (RM) &  \\
\hline
label error    & 0.2541 (0.005) & 0.2656 (0.020) & 0.2685 (0.021) & 0.2625 (0.011) &\\
bias           & 0.1383 (0.014) & 0.0095 (0.064) & 0.0142 (0.219) & 0.0202 (0.566) &\\
RRB            & 0.3070 (0.067) & 0.8564 (0.045) & 0.8687 (0.042) & 0.6741 (0.045) & \\
\hline
               & AdaBoost & AB (RR)  & AB (SDB)  & AB (RM)   & AB (FWL)  \\
\hline
label error    & 0.2602 (0.009) & 0.2429 (0.010) & 0.2745 (0.010) & 0.2637 (0.019) & 0.2859 (0.016)\\
bias           & 0.2617 (0.272) & 0.0376 (0.044) & 0.0034 (0.064) & 0.0391 (0.023) & 0.0093 (0.035)\\
RRB            & 0.6774 (0.219) & 0.8629 (0.051) & 0.8596 (0.067) & 0.6965 (0.037) & 0.6879 (0.042)\\
\hline
\hline
\end{tabular}
\caption{A summary of our experimental results for the German data for
relabeling, massaging, and the fair weak learner. The threshold for SDB was
chosen to achieve perfect statistical parity on the training data. On this
dataset SVM was run with a linear kernel. Standard deviations are reported in
parentheses when known. }
\label{table:german_results}
\end{table*}

\begin{table*}[h]
\centering
\begin{tabular}{| c | ccccc |}
\hline
               & SVM & SVM (RR) & SVM (SDB) & SVM (RM) & \\
\hline
label error    & 0.2718 (5.7e-17)& 0.2793 (0.009) & 0.2716 (0.013) & 0.2876 (0.015)& \\
bias           & 0.0550 (1.4e-17)& 0.1460 (0.017) & 0.0106 (0.035) & 0.0260 (0.047)& \\
RRB            & 0.2424 (0.045)& 0.2588 (0.009) & 0.3064 (0.042) & 0.2552 (0.032)&\\
\hline
               & LR & LR (RR)  & LR (SDB)  & LR (RM)  &  \\
\hline
label error    & 0.2742 (1.14e-16)& 0.3130 (0.011) & 0.2745 (0.010) & 0.2966 (0.008)&\\
bias           & 0.1468 (9.99e-18)& 0.3025 (0.040) & 0.0034 (0.640) & 0.0732 (0.024)&\\
RRB            & 0.1971 (0.036)& 0.3213 (0.035) & 0.8596 (0.067) & 0.2117 (0.036)& \\
\hline
               & AdaBoost & AB (RR) & AB (SDB) & AB (RM) & AB (FWL)   \\
\hline
label error    & 0.2690 (0.004)& 0.3088 (0.009) & 0.2990 (0.008) & 0.2860 (0.019) & 0.2687 (0.008)\\
bias           & 0.0966 (0.020)& 0.2123 (0.013) & 0.0140 (0.017) & 0.0180 (0.037) & 0.0463 (0.016)\\
RRB            & 0.2864 (0.057)& 0.3996 (0.105) & 0.4027 (0.061) & 0.3325 (0.060) & 0.2971 (0.028)\\
\hline
\hline
\end{tabular}
\caption{A summary of our experimental results for the Singles data for
relabeling, massaging, and the fair weak learner. The threshold for SDB was
chosen to achieve perfect statistical parity on the training data. Standard
deviations are reported in parentheses when known. }
\label{table:singles_results}
\end{table*}

\end{document}